\documentclass[]{gsi_template}

\usepackage[toc,page,header]{appendix}
\usepackage{amsmath}
\usepackage{amssymb}
\usepackage{graphicx}
\usepackage{booktabs}
\usepackage{cleveref}

\AtBeginDocument{%
  }

\usepackage[utf8]{inputenc} 
\usepackage[T1]{fontenc}    
\usepackage{hyperref}       
\usepackage{url}            
\usepackage{booktabs}       
\usepackage{amsfonts}       
\usepackage{amsmath}       
\usepackage{nicefrac}       
\usepackage{microtype}      
\usepackage[table]{xcolor}         
\usepackage{xspace}        
\usepackage{balance}      
\usepackage{algorithm}
\usepackage{algpseudocode}
\usepackage{graphicx}
\usepackage{multirow}
\usepackage{makecell} 
\usepackage{array}
\usepackage{amsthm}
\usepackage{amssymb}
\newtheorem{proposition}{Proposition}
\newtheorem{remark}{Remark}

\definecolor{Highlight}{HTML}{CBC3E3}
\newcommand{\chl}{\cellcolor{Highlight}}

\newcommand{\name}{ProteinOPD\xspace} 

\newcommand{\best}[1]{\textbf{#1}}
\newcommand{\second}[1]{\underline{#1}}
\crefname{table}{Table}{Tables}

\title{ProteinOPD: Towards Effective and Efficient Preference Alignment for Protein Design}

\author[1\dag]{Yulin Zhang}
\author[2\dag]{He Cao}
\author[1]{Zihao Jiang}
\author[3]{Chenyi Zi}
\author[4]{Zhipeng Zhou}
\author[2]{Zijing Liu}
\author[2]{Yu Li}
\author[3]{Jia Li}
\author[1*]{Ziqi Gao}
\affiliation[1]{Tsinghua University}
\affiliation[2]{International Digital Economy Academy}
\affiliation[3]{Hong Kong University of Science and Technology (Guangzhou)}
\affiliation[4]{Nanyang Technological University}
\contribution[\dag]{Equal Contributions}
\contribution[*]{Corresponding Author}
\date{\today}
\checkdata[Code \& Data]{\url{https://github.com/THU-AI4S/ProteinOPD}}
\correspondence{Ziqi Gao \url{ziqigao@sz.tsinghua.edu.cn}}

\abstract{
Designing proteins with desired functions or properties represents a core goal in synthetic biology and drug discovery. 
Recent advances in protein language models (PLMs) have enabled the generation of highly designable protein sequences, while preference alignment provides a promising way to steer designs toward desired functions and properties.
Nevertheless, they often trigger catastrophic forgetting of pretrained knowledge, degrading basic designability and failing to balance multiple competing objectives. 
To address these issues, we draw inspiration from On-Policy Distillation (OPD), an advanced post-training method renowned for mitigating catastrophic forgetting through its mode-seeking nature.
In this work, we propose ProteinOPD, a multi-objective preference alignment framework that can effectively balance multiple preference objectives while maintaining the inherent designability of PLMs.
ProteinOPD adapts a pretrained PLM into preference-specific teachers and distills their knowledge into a shared student via token-level OPD on the student’s own trajectories. 
During this process, the student is aligned to a unique normalized geometric consensus of weighted teachers while ensuring bounded optimization under conflicts. This bridges the gap for OPD in multi-objective/teacher alignment.
Extensive experiments show that ProteinOPD achieves substantial gains on target preference objectives without compromising the designability, with an 8× training speedup over RL-based alignment competitors.
}

\begin{document}
\maketitle
\vspace{-20pt}

\section{Introduction}
Proteins are the central functional molecules of living systems and serve as key substrates for synthetic biology~\cite{khalil2010synthetic, yan2023applications}, therapeutic development~\cite{cubillos2021engineering, ebrahimi2023engineering}, and drug discovery~\cite{hie2024efficient, lu2024novo}. However, natural functional proteins occupy only a minuscule portion of the exponentially vast sequence space. While recent advances in protein language models (PLMs)~\cite{ferruz2022protgpt2, madani2020progen, lv2025prollama, nijkamp2023progen2} have enabled the generation of highly designable sequences, the \textit{de novo} design of proteins with tailored, user-specified properties remains a fundamental and persistent challenge in biotechnology.

In response, protein preference alignment (PPA) methods have emerged, aiming to align PLMs to generate sequences with desired properties or functions. Advanced alignment methods can be broadly categorized into two classes: \textbf{post-training} and \textbf{test-time steering}. Post-training methods reshape a PLM's output distribution by directly updating its parameters. The most straightforward approach is supervised fine-tuning (SFT)~\cite{luo2023flexible, munsamy2024conditional, madani2023large}, which adapts the PLM on a curated set of sequences pre-screened to satisfy the target objective. Despite its simplicity, the mode-covering objective of SFT  tends to constrain generation to the training distribution. These constraints prevent the model from simultaneously achieving high novelty and optimal target properties. Another prevalent class of post-training methods is reinforcement learning (RL)~\cite{hou2026property, wang2025proteinzero, liu2025controllable, stocco2024guiding, widatalla2024aligning}, which updates the PLM's policy using reward signals corresponding to the property scores of generated outputs. Although RL achieves effective alignment with desired objectives, its sequence-level supervision requires extensive rollouts, making training computationally expensive. Furthermore, when faced with new preferences, an RL policy can quickly become suboptimal, and drastic policy drift may occur, undermining the designability inherent in the PLM. Here, we use designability to refer to a model's ability to generate sequences that remain consistent with the natural protein distribution while retaining structural plausibility. The second major class, test-time steering~\cite{huang2025steering, adams2025mechanistic}, refers to methods that guide a PLM to generate high-quality sequences during inference without modifying its parameters. Although it offers remarkable computational efficiency, its reliance on linear combinations of steering vectors can be insufficient for navigating the complex, non‑linear trade‑offs when optimizing multiple competing objectives. Hence, we raise a central question—\textit{how can we efficiently align a PLM with multiple competing preferences without compromising its designability?}

\begin{figure}[!htp]
    \centering
    \includegraphics[width=1.0\linewidth]{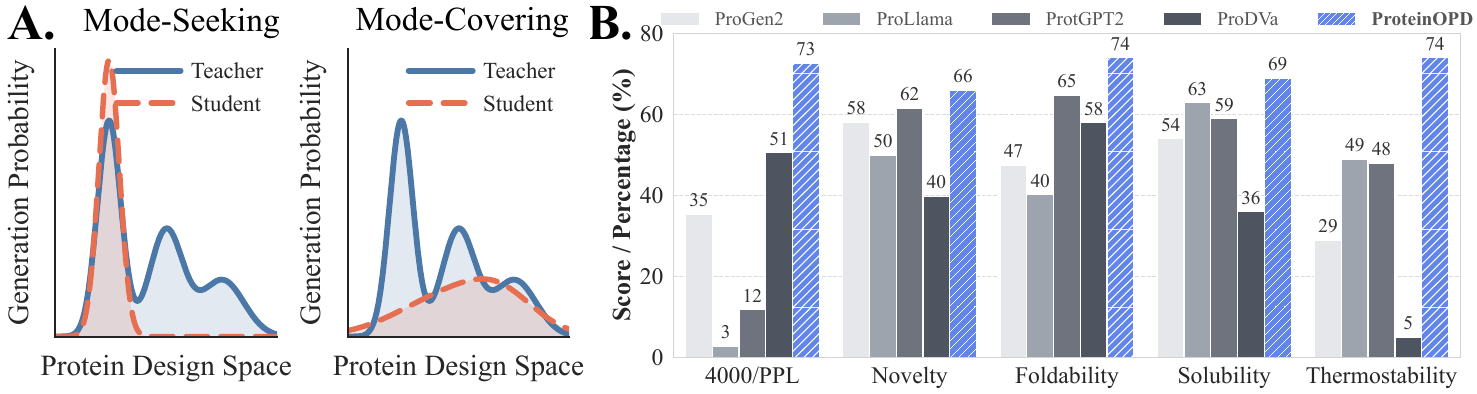}
    \caption{\textbf{A.} We show the superiority of the mode-seeking nature inherent to OPD. \textbf{B.} Performance comparison with advanced protein design and alignment methods. Our proposed ProteinOPD method achieves optimal performance in both designability and multi-property objectives.}
    \label{fig:fig1}
\end{figure}

On-policy Distillation (OPD) is valued in large language model (LLM) alignment due to its dual capability: \textbf{efficiently} adapting to new preferences while \textbf{effectively} resisting catastrophic forgetting~\cite{agarwal2024policy, zhao2026self, zeng2026glm}.
Specifically, OPD enables a student to learn from token-level supervision provided by a teacher model on the student's own generated trajectories. 
Unlike mode-covering methods such as offline distillation or SFT~\cite{zhang2026towards, tajwar2024preference}, OPD exhibits a mode-seeking nature~\cite{agarwal2024policy}. As shown in Figure \ref{fig:fig1}(a), OPD guides the student model to converge to the sharper and higher-reward modes of the teacher, rather than spreading probability mass across suboptimal options. This helps mitigate exposure bias and, in multi-preference scenarios, promotes clearer Pareto-optimal solutions. Moreover, compared to the sparse sequence-level rewards in conventional RL, OPD’s dense token-level supervision enables more efficient and stable optimization. Consequently, the advantages of OPD align well with the key challenges of PPA. Motivated by this, we explore OPD for protein design and leverage it as an algorithmic backbone for effective and efficient PPA.

In this work, we propose ProteinOPD, a multi-objective preference alignment framework that effectively balances multiple preference objectives while preserving the intrinsic designability of pretrained PLMs. ProteinOPD first performs the SFT, which adapts a pretrained PLM into different preference-specific teachers and then distills their preference signals into a shared student through on-policy distillation. To address the research gap in multi-objective (multi-teacher) OPD, we propose the generalized OPD that aligns the student with a geometric consensus distribution, formulated as a normalized product-of-experts over multiple teachers. 
Theoretically, the resulting consensus distribution yields bounded optimization signals even when teachers conflict, improving training stability under noisy and antagonistic preference objectives.
Empirically, ProteinOPD demonstrates the best alignment performance under both single- and multi-objective scenarios, while causing the minimal harm to designability. When using ProtGPT2 as the base model, ProteinOPD reduces PPL by 83.7\%, and enhances its foldability, solubility, and thermostability by 14.8\%, 16.9\%, and 54.2\%, respectively. ProteinOPD matches the alignment performance of RL at 1$/$8 of the training time.
\section{Related Work}
\paragraph{\textbf{Protein Sequence Design.}}
Existing protein sequence design methods can be broadly divided into unconditional and condition-controlled generation. Unconditional methods, such as ProtGPT2~\cite{ferruz2022protgpt2} and RITA~\cite{hesslow2022rita}, aim to generate protein sequences without explicit design constraints, thus exploring previously uncharted protein sequence space. In contrast, condition-controlled methods introduce extra signals to guide generation. ProGen~\cite{madani2020progen} uses biological tags that are defined in the pre-training stage for annotation-conditioned generation, while ProLLaMA~\cite{lv2025prollama} extends protein language modeling to both generation and understanding tasks through continued pre-training and instruction tuning, supporting both unconditional and family-conditional generation. More recent text-guided methods, such as Pinal~\cite{dai2024toward}, ProDVa~\cite{liu2025protein}, and PAAG~\cite{yuan2025annotation}, further use natural-language descriptions or functional annotations to guide protein design.
Collectively, these generative models have substantially advanced protein design by enabling the generation of novel and biologically plausible protein sequences. However, they are primarily trained to model natural protein distributions or to follow predefined conditioning signals, which do not necessarily align their outputs with the specific objectives required in protein engineering \cite{stocco2026steering}, highlighting the significance of protein preference alignment.

\paragraph{\textbf{Protein Preference Alignment.}}
Recent studies have investigated protein preference alignment for steering PLMs toward property- or function-aware sequence generation. Existing approaches mainly fall into two categories. Post-training methods modify model parameters after pretraining: SFT~\cite{luo2023flexible, munsamy2024conditional} trains PLMs on curated sequences satisfying target criteria, whereas RL~\cite{hou2026property, wang2025proteinzero, liu2025controllable} optimizes generation policies using property-based rewards. Test-time steering~\cite{huang2025steering, adams2025mechanistic} provides an alternative by guiding generation during inference while keeping model parameters fixed, typically through steering vectors constructed from positive and negative datasets associated with the target preference. Despite their effectiveness in certain settings, these methods still face several limitations. SFT may provide limited alignment when such data are scarce or biased~\cite{madani2023large}. RL allows for direct optimization of property-based rewards. However, its training process is often inefficient and may induce catastrophic forgetting, thereby weakening the pre-acquired designability and degrading the structural and evolutionary quality of the generated proteins ~\cite{stocco2024guiding, widatalla2024aligning}. Conversely, test-time steering avoids parameter updates, but its effectiveness depends heavily on the quality of steering directions and is difficult to align with multi-objective preferences. To address these limitations, we introduce a multi-objective on-policy distillation framework that aligns PLMs with multiple preferences in an effective and efficient manner while preserving their pretrained designability.
\section{Preliminaries}
\subsection{On-Policy Distillation}
\label{pre:opd}
On-Policy Distillation (OPD) computes supervision provided by teacher policy $p_T$ on trajectories sampled from the current student policy $p_S$.
Given a conditioning problem $x \sim \mathcal{D}_x$, the student samples a trajectory
\(y = (y_1, \dots, y_N) \sim p_{S}(\cdot \mid x)\), where \(N \triangleq |y|\) denotes the rollout length.
Both policies are then evaluated on the student-generated trajectory, yielding two next-token distributions at each step $n$: $p_S(\cdot \mid x,y_{<n})$ and $p_T(\cdot \mid x,y_{<n})$. 

A standard formulation minimizes the trajectory-averaged, token-wise divergence
\begin{equation}
\mathcal{L}_{\mathrm{OPD}}(\theta)
=
\mathbb{E}_{x \sim \mathcal{D}_x,\,
y \sim p_{S}(\cdot \mid x)}
\left[
\sum_{t=1}^{N}
D\!\left(
p_{S}(\cdot \mid x, y_{<t})
\,\middle\|\,
p_{T}(\cdot \mid x, y_{<t})
\right)
\right],
\label{eq:opd_token}
\end{equation}
where $D$ can be any distribution divergence measure. Unlike offline supervised fine-tuning (SFT), which frequently suffers from exposure bias and distribution shifts, this token-level decomposition ensures that the student receives corrective signals directly on its actively visited state space. This property establishes a robust foundation for transferring generative behaviors, which we subsequently extend to address multi-objective preference alignment.

\begin{figure}[!ht]
    \centering
    \includegraphics[width=1.0\linewidth]{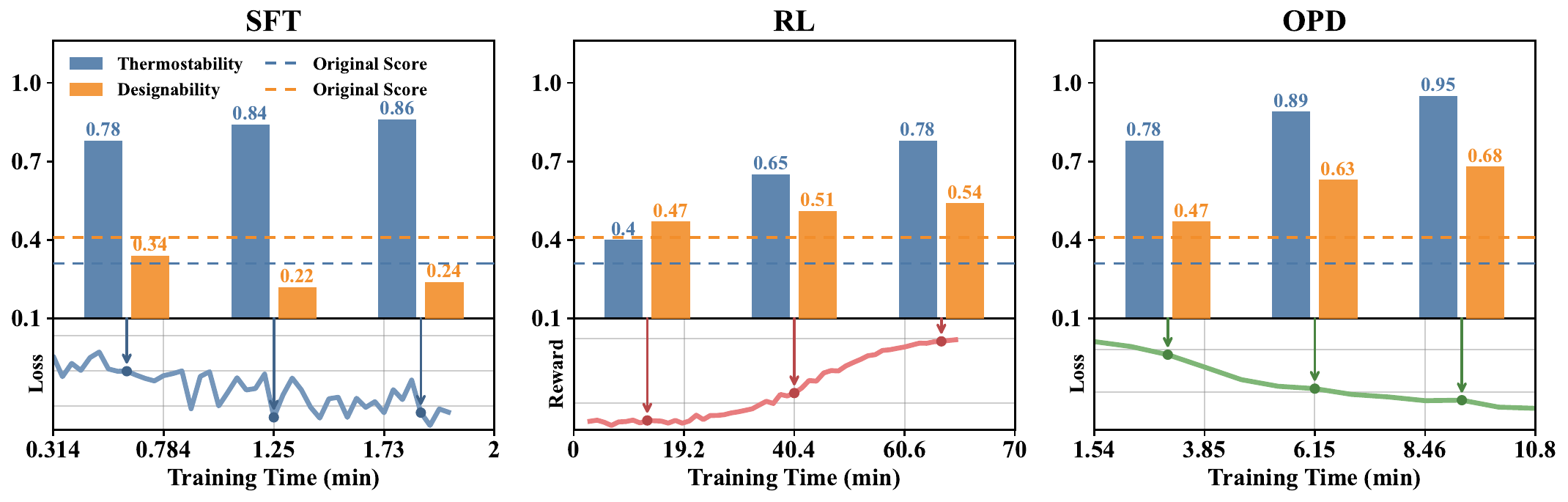}
    \caption{Preliminary comparative analysis of three popular alignment methods for improving thermostability. We track the loss/reward trend and evaluate three intermediate checkpoints on thermostability (alignment target) and designability. Dashed horizontal lines denote the capability of the base model (i.e., ProLLaMA) before alignment.}
    \label{fig:fig-2}
\end{figure}

\subsection{Preliminary Analysis}
Before introducing our method, we empirically analyze the efficiency and efficacy of OPD. For comparison, we investigate two other representative paradigms for protein preference alignment: SFT and RL.
For all three methods, following MoMPNN~\cite{hou2026property}, we employ TemBERTure~\cite{rodella2024temberture} as the oracle to assess the sequence's thermostability.
For SFT, we sample protein sequences from UniRef50 and use TemBERTure to estimate their thermostability. We select 100 high-quality data entries for SFT, and \textbf{the resulting model (after SFT) serves as the teacher in OPD}. For RL, we employ GRPO \cite{shao2024deepseekmath} with ProLLaMA~\cite{lv2025prollama} as the base model and use TemBERTure as the reward function. 
We take designability and thermostability as the primary metrics, and report the training time accordingly.

These exploratory experiments reveal complementary limitations of SFT and RL. 
As shown in Figure~\ref{fig:fig-2}, SFT achieves strong thermostability alignment at the cost of severe degradation in designability, suggesting catastrophic forgetting of pretrained protein knowledge. This also aligns with the mode‑covering nature of SFT. In contrast, RL preserves the pretrained model's designability, but the improvement of thermostability quickly saturates. 
For efficiency, RL is clearly the most time‑consuming, due to repeated oracle evaluations and its sequence‑level (sparse) supervision.
Since SFT requires minimal training time and aligns effectively with new preferences, the resulting SFT model naturally serves as the teacher in OPD. Although SFT has limitations in preserving designability, OPD’s resistance to forgetting ensures that the distillation retains both the student’s designability and the teacher’s preference scores.

Overall, our preliminary results show that OPD achieves stronger thermostability improvement effectively while mitigating catastrophic forgetting in a time-efficient way. This motivates us to further extend OPD from single-objective alignment to the multi-objective alignment setting.

\section{Method}
\paragraph{\textbf{Overview.}}
In this section, we introduce ProteinOPD, a method for multi-objective preference alignment in protein design, as illustrated in Figure~\ref{fig:pipeline}. The method first adapts a pre-trained protein language model into multiple preference-specific teachers. \textbf{For single-objective alignment}, we apply the standard on-policy distillation framework, distilling a teacher model into the student via token-level distributional alignment on the student's own rollout trajectories. \textbf{For multi-objective alignment}, the student is aligned with a geometric consensus distribution defined as a normalized product-of-experts (PoE) over all teachers. Moreover, we emphasize the general applicability of this multi-teacher OPD, which extends the standard OPD framework to the practical scenario of multi-teacher distillation. 

\paragraph{\textbf{Teacher Model Construction.}}For each target preference, we construct a specialized teacher through an efficient oracle-guided adaptation process. First, we sample sequences from UniRef50 and score them using a property-specific oracle. Then, the pre-trained PLM is adapted via supervised fine-tuning (SFT) on a small high-scoring subset $\mathcal{D}_p$ of only a few hundred sequences:
{
\begin{equation}
\phi^{*}
=
\arg\min_{\phi}
\sum_{(x,y)\in \mathcal{D}_p}
\mathcal{L}_{\mathrm{SFT}}\bigl(f(x;\phi),y\bigr),
\end{equation}
}

where the pretrained PLM is parameterized by $\phi$. In this work, we choose ProtGPT2~\cite{ferruz2022protgpt2} and ProLLaMA~\cite{lv2025prollama} as the pretrained PLMs for unconditional and conditional generation, respectively. This process requires minimal oracle annotations and short training time, avoiding the costly online queries in RL. The adapted PLM is then used as a preference-specific teacher, providing preference-biased token-level guidance for OPD.

\begin{figure}[!ht]
    \centering
    \includegraphics[width=1.0\linewidth]{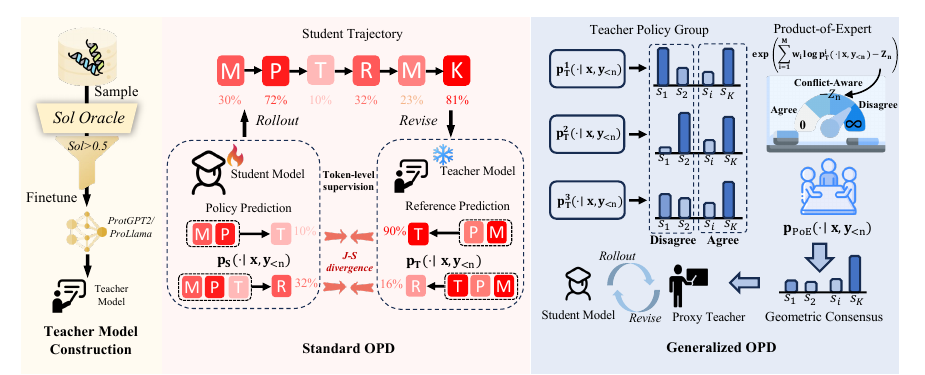}
    \caption{\textbf{Overview of our method.} Our framework comprises three stages: teacher model construction, single-objective alignment via standard OPD, and multi-objective alignment via generalized OPD. ProteinOPD flexibly adapts to each scenario, enjoying the effectiveness and efficiency of standard OPD for single-objective alignment and employing our proposed generalized OPD to manage competing objectives in multi-objective settings.}
    \label{fig:pipeline}
\end{figure}

\subsection{Single-Objective Alignment}

We instantiate the OPD formulation in Section~\ref{pre:opd} on autoregressive decoder-only PLMs, where
{
\begin{equation}
p_{\theta}(y \mid x)
=
\prod_{n=1}^{N}
p_{\theta}(y_n \mid x, y_{<n}).
\label{eq:ar_factorization}
\end{equation}
}

Here, \(x\) denotes the conditioning context for protein generation, such as a functional or family description. 
The single-preference training objective directly follows the token-level OPD objective in Eq.~\eqref{eq:opd_token}.
In all experiments, we instantiate the generic divergence \(D\) as the generalized Jensen--Shannon divergence:
{
\begin{equation}
\mathrm{JSD}_{\beta}(p_S \| p_T)
=
\beta \,
\mathrm{KL}\!\left(
p_S \,\middle\|\, m
\right)
+
(1-\beta)\,
\mathrm{KL}\!\left(
p_T \,\middle\|\, m
\right),
\label{eq:jsd_loss}
\end{equation}
}

where
$
m
=
\beta p_S + (1-\beta)p_T,
\label{eq:jsd_mixture}
$ is the interpolated mixture distribution, 
and we set \(\beta=0.5\) by default. This full-vocabulary logit distillation provides dense, token-level feedback, making preference alignment effective and efficient.

\subsection{Multi-Objective Alignment with Generalized OPD}
Real-world protein design frequently necessitates the optimization of \textbf{multiple competing objectives}, a challenge that standard OPD is ill-equipped to handle. To bridge this gap, we introduce a multi-teacher strategy into the OPD framework. We consider the multi-teacher OPD setting with a student policy $p_S$ and \(M\) teacher policies $\{p_T^i\}_{i=1}^M$. Let $\{w_i\}_{i=1}^M$ be non-negative teacher weights satisfying $\sum_{i=1}^M w_i = 1$. Our goal is to construct a unified token-level target that captures the shared preference structure across teachers, and then align the student to this target under the standard OPD framework.

\paragraph{\textbf{Geometric Consensus Induced by the Multi-teacher Objective.}}

A central question in multi-teacher distillation is whether the student should imitate \emph{coverage} or \emph{consensus}. 
Arithmetic mixtures emphasize coverage, assigning high probability to tokens preferred by any teacher. 
Here, instead, we seek a target that retains only the shared support across teachers. 
For each student-visited state \((x,y_{<n})\), we construct a local consensus distribution by finding the optimal distribution $q$:
{
\begin{equation}
\arg\min_{q}
\sum_{i=1}^{M}
w_i\,
\mathrm{KL}\!\left(
q
\,\middle\|\,
p_T^i(\cdot \mid x,y_{<n})
\right).
\label{eq:local_consensus}
\end{equation}
}

\begin{proposition}
The solution to Eq.~\eqref{eq:local_consensus} is a normalized product-of-experts in Eq.~\ref{eq:normalized_poe}. 
\end{proposition}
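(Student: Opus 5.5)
The plan is to rewrite the weighted sum of KL divergences in Eq.~\eqref{eq:local_consensus} as a single KL divergence to the normalized product-of-experts, plus a constant that does not depend on $q$. Gibbs' inequality then identifies the minimizer. Fix a student-visited state $(x,y_{<n})$ and abbreviate $p_i(v)=p_T^i(v\mid x,y_{<n})$ for tokens $v$ in the finite vocabulary $\mathcal{V}$. Define
\begin{equation*}
\tilde q(v)=\prod_{i=1}^M p_i(v)^{w_i},\qquad Z=\sum_{v\in\mathcal{V}}\tilde q(v),\qquad q^\star(v)=\tilde q(v)/Z,
\end{equation*}
which is the normalized product-of-experts of Eq.~\ref{eq:normalized_poe}.

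\textbf{Step 1 (expansion).} Since $\sum_i w_i=1$, for any distribution $q$ we have
\begin{equation*}
\sum_{i=1}^M w_i\,\mathrm{KL}(q\|p_i)=\sum_v q(v)\log q(v)-\sum_v q(v)\sum_i w_i\log p_i(v).
\end{equation*}
The second term equals $\sum_v q(v)\log\tilde q(v)$, which we can write as $\sum_v q(v)\log q^\star(v)+\log Z$. Combining the two gives the identity
\begin{equation*}
\sum_{i=1}^M w_i\,\mathrm{KL}(q\|p_i)=\mathrm{KL}(q\|q^\star)-\log Z.
\end{equation*}

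\textbf{Step 2 (minimization).} The term $-\log Z$ does not depend on $q$. By Gibbs' inequality, $\mathrm{KL}(q\|q^\star)\ge 0$, with equality if and only if $q=q^\star$. So $q^\star$ is the unique minimizer, and the minimum value is $-\log Z$. As a side remark, Hölder's inequality gives $Z\le\prod_i(\sum_v p_i(v))^{w_i}=1$, so the minimum value is nonnegative, as it should be.

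\textbf{Main obstacle (support).} The real difficulty is the edge cases where teachers assign zero probability, and I would handle these with the conventions $0\log 0=0$ and $0^0=1$. There are three situations to check.
\begin{itemize}
\item If $q(v)>0$ for some $v$ where some $p_i(v)=0$ with $w_i>0$, then the objective is $+\infty$. Any finite minimizer must therefore be supported on the common support $S=\bigcap_{i:w_i>0}\operatorname{supp}p_i$.
\item Teachers with $w_i=0$ contribute nothing to the objective, and they also contribute a factor $p_i^0=1$ to $\tilde q$, so they drop out consistently.
\item The identity of Step 1 holds when restricted to $S$. This requires $Z>0$, i.e.\ $S\neq\emptyset$.
\end{itemize}
For softmax teachers every $p_i$ has full support, so $Z>0$ automatically, and the formal computation of Step 1 goes through directly. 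In the degenerate case $S=\emptyset$, every $q$ gives infinite objective, and I would state this explicitly as excluded. An alternative route would be Lagrange multipliers on the simplex, giving $\log q(v)+1-\sum_i w_i\log p_i(v)+\lambda=0$. I would mention this only as intuition, since the KL identity already gives global optimality and uniqueness directly through strict convexity.
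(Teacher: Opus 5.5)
Your proof is correct and follows the paper's argument essentially verbatim. The paper also expands the weighted KL sum and substitutes the definition of $p_{\mathrm{PoE}}$ to obtain $\mathrm{KL}(q_n\|p_{\mathrm{PoE}})-Z_n$ (your $-\log Z$), then uses non-negativity of KL to conclude that the normalized PoE is the unique minimizer; the only difference is that the paper assumes $p_T^i(v\mid x,y_{<n})>0$ for all $v$ (softmax outputs), so your treatment of zero-probability supports is extra care rather than a needed step.
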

The proof is straightforward and provided in Appendix~\ref{app:proofs}.


\paragraph{\textbf{Normalized PoE target.}}
As mentioned above, the solution to Eq.~\eqref{eq:local_consensus} is the normalized product-of-experts (PoE), which is a closed-form weighted geometric mean of the teacher distributions. This consensus distribution thus naturally upweights tokens that receive joint support across teachers:
\begin{equation}
p_{\mathrm{PoE}}(\cdot \mid x,y_{<n})
=
\mathrm{softmax}
\left(
\sum_{i=1}^{M}
w_i
\log p_T^i(\cdot \mid x,y_{<n})
\right)
=
\exp\!\left(
\sum_{i=1}^M w_i \log p_T^i(\cdot \mid x,y_{<n}) - Z_n
\right),
\label{eq:normalized_poe}
\end{equation}
where
$
Z_n
=
\log
\sum_{v \in \mathcal{V}}
\exp\!\left(
\sum_{i=1}^M w_i \log p_T^i(v \mid x,y_{<n})
\right).
\label{eq:zt_def}
$
Thus, computing the consensus target only requires averaging teacher log-probabilities with weights $\{w_i\}_{i=1}^M$, followed by a single \texttt{logsumexp} over the vocabulary.

\paragraph{\textbf{Disagreement from Normalization.}}
The normalization term \(Z_n\) also captures how much probability mass is jointly shared across teachers. Applying the weighted Arithmetic Mean-Geometric Mean (AM--GM) inequality, we have the following property.

\begin{proposition}
  For any prefix $(x,y_{<n})$,
  $Z_n
  =
  \log
  \sum_{v \in \mathcal{V}}
  \prod_{i=1}^M
  p_T^i(v \mid x,y_{<n})^{w_i}
  \le 0$,
  with equality if and only if all teachers with non-zero weight induce the same next-token distribution at prefix \(x,y_{<n}\).
\end{proposition}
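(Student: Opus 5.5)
The plan is a pointwise application of the weighted AM--GM inequality, followed by summation over the vocabulary; the equality case is then read off from the equality condition of AM--GM.

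Fix the prefix and abbreviate $p_i(v) := p_T^i(v \mid x,y_{<n})$. Only indices with $w_i>0$ matter, since factors with $w_i=0$ equal $1$ (using the convention $0^0=1$). Let $I=\{i : w_i>0\}$, so that $\sum_{i\in I} w_i = 1$.

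\textbf{Inequality.} For each token $v$, weighted AM--GM for nonnegative reals gives
\begin{equation*}
\prod_{i\in I} p_i(v)^{w_i} \le \sum_{i\in I} w_i\, p_i(v).
\end{equation*}
If some $p_i(v)=0$, the left side is $0$ and the bound is trivial. Otherwise it follows from concavity of $\log$ (Jensen): $\sum_i w_i \log p_i(v) \le \log \sum_i w_i p_i(v)$. Summing over $v\in\mathcal{V}$ and exchanging the finite sums yields
\begin{equation*}
\sum_{v} \prod_{i\in I} p_i(v)^{w_i} \le \sum_{i\in I} w_i \sum_v p_i(v) = \sum_{i\in I} w_i = 1,
\end{equation*}
and taking logarithms gives $Z_n \le 0$. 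For $Z_n$ to be finite we also need the sum to be positive, i.e.\ a token jointly supported by all teachers. Otherwise $Z_n=-\infty$, which still satisfies the inequality, and I would note this convention.

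\textbf{Equality.} The step needing care is showing that equality forces all teachers in $I$ to coincide, while handling zeros. Equality in the summed bound requires equality at every $v$, because each termwise inequality goes the same direction. Strict concavity of $\log$ (or the equality case of weighted AM--GM) says that, at a token where all $p_i(v)>0$, equality holds iff the values $p_i(v)$, $i\in I$, are all equal.

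At a token where some $p_i(v)=0$, the left side is $0$. Equality then forces $\sum_i w_i p_i(v)=0$, so $p_j(v)=0$ for every $j\in I$, and again all values agree. Hence $p_i(v)=p_j(v)$ for all $v$ and all $i,j\in I$.

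Conversely, if all $p_i$ with $i\in I$ equal a common distribution $p$, then $\prod_i p(v)^{w_i}=p(v)$, so the sum is $1$ and $Z_n=0$. This completes both directions.
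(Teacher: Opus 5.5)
Your proof is correct and takes the same route as the paper: apply weighted AM--GM token by token, sum over $\mathcal{V}$ using $\sum_v p_T^i(v\mid x,y_{<n})=1$, and read off the equality case from the AM--GM equality condition. The paper simply assumes strictly positive softmax probabilities. You additionally handle zero-probability tokens and make explicit that equality of the sum forces equality at every token, which is a harmless and slightly more general refinement.
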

\begin{remark}
We interpret $-Z_{n}$ as a natural, token-level measure of teacher disagreement (or attribute conflict). By definition, when all teachers agree on an amino acid substitution, $-Z_{n}$ approaches 0. Conversely, when teachers severely conflict, $-Z_{n}$ increases. This provides a cost-free indicator to monitor and quantify the internal tug-of-war among different protein properties during training, without requiring any additional computational overhead.
\end{remark}

\paragraph{\textbf{Training Objective.}}
The multi-preference alignment follows the same OPD protocol as the single-preference setting, but replaces the single-teacher target with the normalized PoE target in Eq.~\eqref{eq:normalized_poe}. 
{
\setlength{\abovedisplayskip}{2pt}
\setlength{\belowdisplayskip}{2pt}
\begin{equation}
\mathcal{L}^{\mathrm{multi}}(\theta)
=
\mathbb{E}_{x \sim \mathcal{D}_x,\, y \sim p_S(\cdot \mid x)}
\left[
\sum_{n=1}^{N}
\mathrm{JSD}_{\beta}\left(
p_{\theta}
\,\middle\|\,
p_{\mathrm{PoE}}
\right)
\right].
\label{eq:final_mt_objective}
\end{equation}
}
The normalized PoE defines a unified teacher policy, combining all preference-specific experts. 
\section{Experiments}
In this section, we show that \name consistently achieves the best alignment-designability trade-off for multi-objective preference alignment in the main result. Further analysis is organized around three research questions. We examine two representative settings: unconditional protein design with ProtGPT2, which explores uncharted protein design space, and conditional protein design with ProLLaMA, which evaluates whether generated sequences satisfy specified design conditions.
\subsection{Experimental Setups}
\paragraph{\textbf{Data Construction.}}
We construct data for both unconditional and conditional protein generation from UniRef50~\cite{suzek2015uniref}. 
For the conditional setting, we follow ASPO~\cite{huang2025steering} and restrict the data to the lysozyme-like superfamily, instantiating a representative family-conditioned generation task. 
Each sequence is annotated with preference scores from property-specific oracles. 
For each preference objective, we retain high-scoring sequences using predefined thresholds to construct preference-specific adaptation data. 
Details of the data construction are provided in Appendix~\ref{app:data_construct}.
\paragraph{\textbf{Evaluation Metrics.}}
\label{exp:metrics}
We evaluate generated proteins from three perspectives: designability, preference alignment, and overall multi-objective performance. 
For designability, we report perplexity (PPL), Novelty, and, in the conditional setting, ProTrek score~\cite{su2024protrek}. PPL measures the plausibility of generated sequences under a protein language model, while Novelty quantifies their dissimilarity to reference protein databases. We consider two variants: Novelty-U, computed against UniProtKB~\cite{uniprot2023uniprot} and shared across all baselines, and Novelty-T, computed against the training set of each baseline. In conditional generation, ProTrek further evaluates the semantic consistency between generated proteins and the input conditions, following ProDVa~\cite{liu2025protein}.  
For preference alignment, we use property-specific oracles to score generated proteins. Foldability is measured by ESMFold~\cite{lin2022language}-derived pLDDT and PAE~\cite{jumper2021highly}, solubility (Sol.) by Protein-Sol~\cite{hebditch2017protein}, and thermostability (Thermo.) by TemBERTure~\cite{rodella2024temberture}. 
Overall performance is measured by hypervolume (HV), summarizing trade-offs across designability and preference-alignment metrics. Details are provided in Appendix~\ref{app:metric_details}.

\begin{table}[!ht]
    \centering
    \caption{Performance comparison of baselines and \name in {multi-objective} preference alignment in the \textit{unconditional} setting. 
    The best and second-best scores are marked in \textbf{bold} and \second{underline}. }
    \resizebox{\textwidth}{!}{
    \begin{tabular}{lcccccccccc}
        \toprule
        \multirow{3}{*}{\large\textbf{Models}} 
        & \multicolumn{3}{c}{\textbf{Designability}} 
        & \multicolumn{6}{c}{\textbf{Preference Alignment Ability}}
        & \multicolumn{1}{c}{\textbf{Overall}}\\
        \cmidrule[0.5pt](lr){2-4} 
        \cmidrule[0.5pt](lr){5-10}
        \cmidrule[0.5pt](lr){11-11}
        & \makecell{PPL \\ (\(\downarrow\))} 
        & \makecell{Novelty-U \\ (\(\uparrow\))} 
        & \makecell{Novelty-T \\ (\(\uparrow\))} 
        & \makecell{pLDDT \\ (\(\uparrow\))} 
        & \makecell{\%>70 \\ (\(\uparrow\))} 
        & \makecell{pAE \\ (\(\downarrow\))} 
        & \makecell{\%<10 \\ (\(\uparrow\))} 
        & \makecell{Sol \\ (\(\uparrow\))} 
        & \makecell{Thermo \\ (\(\uparrow\))} 
        & \makecell{HV \\ (\(\uparrow\))} \\
        \cmidrule[0.5pt](lr){1-1}
        \cmidrule[0.5pt](lr){2-2}
        \cmidrule[0.5pt](lr){3-4}
        \cmidrule[0.5pt](lr){5-8}
        \cmidrule[0.5pt](lr){9-9}
        \cmidrule[0.5pt](lr){10-10}
        \cmidrule[0.5pt](lr){11-11}
         \texttt{Random (U)} & 1765.28 & 57.13 & -- & 24.02 & 0.00 & 24.15 & 0.00 & 0.34 & 0.02 & 0.05 \\ 
        \texttt{Random (E)} & 1747.35 & 59.84 & -- & 26.9 & 0.39 & 23.99 & 0.78 & 0.44 & 0.03 & 0.01 \\
        \midrule
        ESM2 & 1467.04 & 52.75 & 60.72 & 47.52& 5.68 & 25.7 & 0.55 & 0.40 & 0.42 & 0.15 \\
        ESM3 & 353.71 & 51.21 & \second{72.66} & 61.30 & 27.49 & 19.68 & 20.52 & 0.31 & 0.56 & 0.21 \\
        ProGen2 & 113.04 & \best{85.76} & 58.70 & 47.44 & 10.55 & 24.7 & 6.25 & 0.54 & 0.29 & 0.44 \\
        ProLLaMA & 1445.8 & 55.07 & 42.95 & 40.17 & 0.0 & 19.4 & 0.39 & 0.63 & 0.49 & 0.06 \\
        ProtGPT2 & 339.57 & \second{59.97} & 69.17 & 64.77 & 42.58 & 17.69 & 24.61 & 0.59 & 0.48 & 0.33 \\ 
        \midrule
        Pinal & 542.49 & 46.44 & 45.68 & 64.32 & \second{52.79} & \second{15.92} & \second{40.28} & 0.41 & 0.28 & 0.37 \\
        ProteinDT & 1255.16 & 50.69 & 52.25 & 42.75 & 11.37 & 24.77 & 5.82 & 0.39 & 0.36 & 0.29 \\
        ProDVa & \second{79.42} & 2.21 & 37.58 & 58.04 & 9.38 & 18.88 & 7.81 & 0.36 & 0.05 & 0.02 \\
        \midrule 
        ASPO & 804.45 & 51.57 & 56.77 & 47.99 & 1.95 & 19.63 & 3.12 & \best{0.69} & \second{0.68} & 0.17 \\ 
        MoMPNN & 748.73 & 42.50 & 65.47 & \second{70.19} & 50.66 & 16.78 & 35.05 & 0.57 & 0.65 & \second{0.46} \\ 
        \chl \textbf{ProteinOPD} & \chl \best{55.25}  & \chl 54.96  & \chl \best{75.68} & \chl \best{74.37} & \chl \best{61.72} & \chl \best{13.91} & \chl \best{45.7} & \chl \best{0.69} & \chl \best{0.74} & \best{0.62}\chl  \\ 
        \bottomrule
    \end{tabular}
    }
     \label{tab: unconditional-multi}
\end{table}

\paragraph{\textbf{Baselines.}}
General protein design baselines include pretrained PLMs, including ESM2~\cite{lin2022language}, ESM3~\cite{hayes2025simulating}, ProGen2~\cite{nijkamp2023progen2}, ProLLaMA~\cite{lv2025prollama}, and ProtGPT2~\cite{ferruz2022protgpt2}, and text-conditioned models, including Pinal~\cite{dai2024toward}, ProteinDT~\cite{liu2025text}, and ProDVa~\cite{liu2025protein}. We also include two random baselines: Random(U), which samples amino acids uniformly, and Random(E), which samples from the empirical amino acid distribution in SwissProt following ProDVa~\cite{liu2025protein}. 
Preference-alignment baselines cover two paradigms: post-training and test-time steering. For post-training, we use SFT on ProtGPT2~\cite{ferruz2022protgpt2} and ProLLaMA~\cite{lv2025prollama}. These models also serve as preference-specific teachers in \name. For test-time steering, we compare with ASPO~\cite{huang2025steering}, which steers PLM generation with property-specific vectors.  We further include MoMPNN~\cite{hou2026property} as a cross-paradigm reference baseline, since few multi-objective alignment baselines are available for sequence-only protein generation. 
For conditional protein design, all baselines are evaluated under the same lysozyme-like family condition. 
Details of the baseline settings are provided in Appendix~\ref{app:baseline_details}.

\subsection{Performance on Multi-Objective Preference Alignment}
\label{exp:main-results}

The main results of the paper are provided by Table~\ref{tab: unconditional-multi}. We highlight our key findings as follows: 

\textbf{(1) \name achieves the best overall performance, with an HV of 0.62. }
Compared with MoMPNN, the strongest cross-paradigm reference in our evaluation, ProteinOPD improves HV by 34.8\%. 
It also outperforms the best general design baseline, ProGen2, by 40.9\%, and the strongest text-conditioned baseline, Pinal, by 67.6\%, indicating a stronger global trade-off between designability and preference alignment. 

\textbf{(2) \name consistently improves all target preferences while preserving designability. }
Compared to the base model ProtGPT2, \name improves the pLDDT, Sol, and Thermo by 14.8\%, 16.9\%, and 54.2\%, respectively. 
Meanwhile, it maintains sequence-plausibility and novelty, reducing PPL by 83.7\% and improving Novelty-T by 9.4\%. 
Compared with the second-best baseline, \name achieves the best pLDDT of 74.37, improving over MoMPNN by 6.0\%, matches the best Sol of 0.69, and achieves the best Thermo of 0.74, improving over ASPO by 8.8\%. 

\textbf{(3) Existing baselines are limited by their alignment mechanisms. }
General design models provide strong priors but lack explicit preference optimization. 
Text-conditioned models offer condition-level guidance, but weakly supervise specific preferences. Pinal remains lower than \name by 0.28 in solubility and 0.46 in thermostability.  
ASPO uses fixed test-time steering, which is bounded by the pretrained generator and cannot update model parameters. 
MoMPNN optimizes multiple objectives under an inverse-folding setting. The backbone constraints improve structure-related metrics, but their designability remains weaker than \name, with 693.48 higher PPL and 12.46 lower Novelty-T.  
Overall, \name yields a more balanced trade-off across designability and preference alignment.

Performance comparison of the baselines and ProteinOPD in multi-objective preference alignment in the \textbf{conditional setting} is provided in Appendix~\ref{app: multi-prefer-condition}. 

\subsection{Analysis}
\label{exp:RQ}
Beyond the main result, we perform a three-level analysis of \name: (\textbf{RQ1}) to examine the trade-off behind its HV gains and whether it expands the Pareto frontier; (\textbf{RQ2}) to investigate, in the single-objective case, if OPD transfers preferences while mitigating the designability loss from SFT teachers; and (\textbf{RQ3}) to assess its efficiency for training and data needs for teacher construction.

\begin{figure}[!ht]
    \centering
    \vspace{-0.2cm}
    \includegraphics[width=1.0\linewidth]{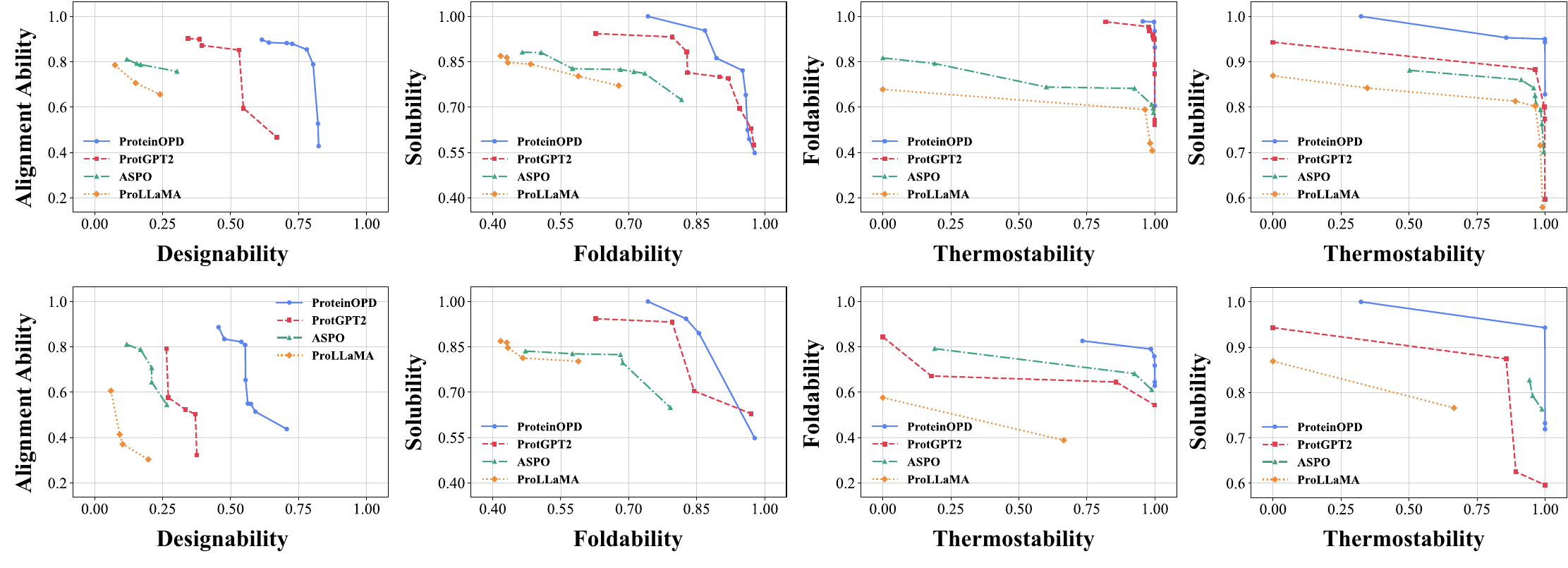}
    \vspace{-0.5cm}
    \caption{\textbf{Pareto-front analysis}. The \textbf{top row} reports fronts over all generated sequences, and the \textbf{bottom row} reports fronts after filtering sequences with Novelty-U \(>0.7\). 
Designability is computed from normalized PPL, and alignment ability is the average of pLDDT, solubility, and thermostability.}
    \label{fig:pareto-front}
\end{figure}

\paragraph{\textbf{Pareto Frontier Expansion in Multi-Objective Preference Alignment (RQ1).}}As illustrated in Figure~\ref{fig:pareto-front}, across both designability--alignment and pairwise preference comparisons, \name consistently shifts the frontier outward, indicating stronger attainable trade-offs among objectives. 
We further evaluate high-novelty sequences with Novelty-U \(>0.7\). 
\name retains more favorable fronts, showing its advantage extends to exploratory regions of the sequence space rather than remaining confined to sequences close to UniProtKB. This behavior is essential for unconditional protein design, where the goal is to explore uncharted protein design space.

\begin{figure}[h!]
    \centering
    \includegraphics[width=1.0\linewidth]{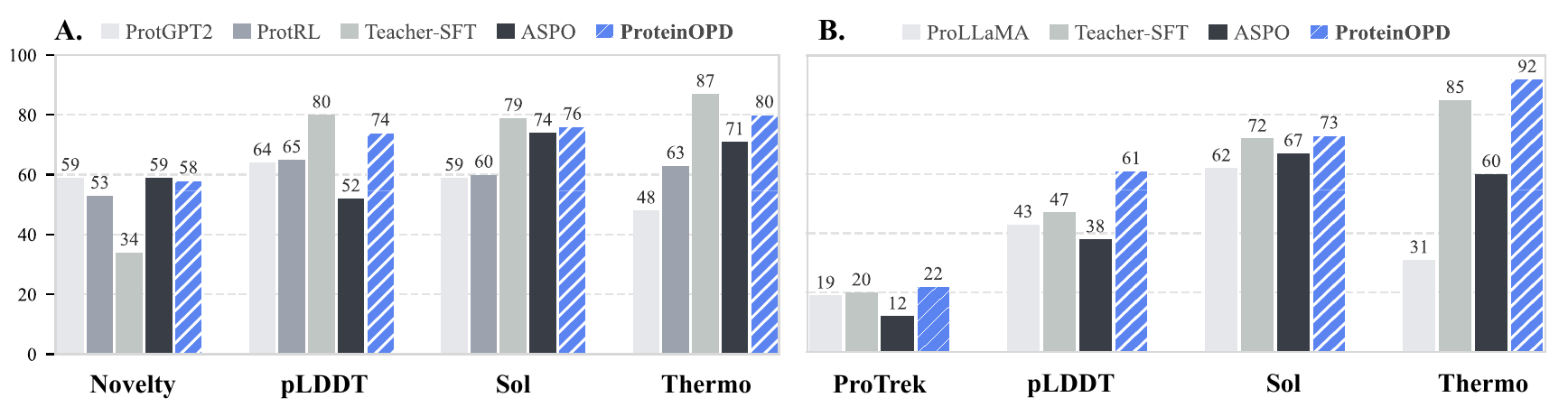}
    \vspace{-0.6cm}
    \caption{
Single-objective alignment results in the unconditional setting (\textbf{A}) and the conditional setting (\textbf{B}). Teacher-SFT refers to the constructed teacher model adapted to the high-scoring subset. 
}\label{fig:single-objective}
\end{figure}

\paragraph{\textbf{Performance on Single-Objective Preference Alignment (RQ2).}}Here, we analyze the results of \name in the single-objective setting. Figure~\ref{fig:single-objective}A shows that OPD can transfer preference signals from SFT teachers while substantially mitigating the designability loss caused by direct fine-tuning. 
In the unconditional setting, Teacher-SFT improves pLDDT, Sol, and Thermo substantially, but at the cost of a 42.4\% novelty drop. In contrast, \name preserves most of these property gains while reducing novelty by only 1.7\% relative to ProtGPT2. This gap supports our claim that OPD mitigates the forgetting behavior of SFT-based alignment. 

In the conditional setting, the advantage of \name is even more pronounced because alignment must be achieved without breaking condition consistency, as shown in Figure~\ref{fig:single-objective}B. Compared with Teacher-SFT, ProteinOPD improves ProTrek Score by 10.0\%, indicating better consistency with the original conditional generation capability. 
It also further improves pLDDT, solubility, and thermostability by 29.8\%, 1.4\%, and 8.2\%, respectively. 
The behavior is consistent with the nature of OPD: the student is corrected on its own visited states, rather than being forced to cover the full distribution of the high-scoring SFT data. 
Compared with ProtRL and ASPO, ProteinOPD provides more stable single-objective alignment while better preserving either novelty in unconditional generation or condition fidelity in conditional generation.

\paragraph{\textbf{Efficiency in Computation and Data (RQ3).}}

\begin{figure}[!ht]
    \centering
    \includegraphics[width=1.0\linewidth]{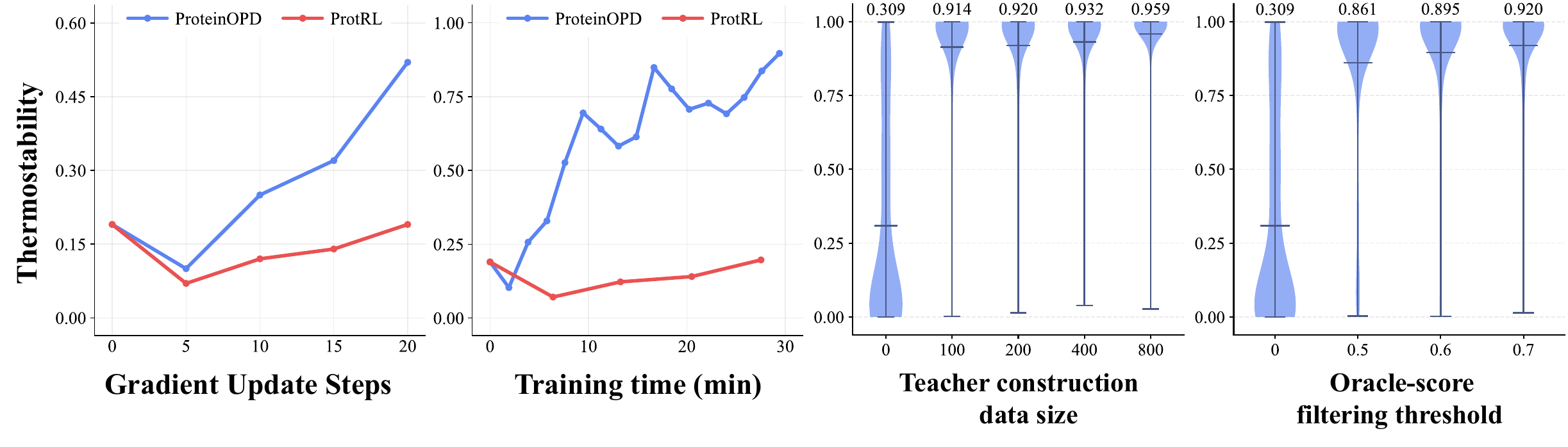}
    \vspace{-0.7cm}
    \caption{\textbf{Efficiency analysis.} Online cost is evaluated by comparing \name and ProtRL over gradient steps and wall-clock time. Offline cost is evaluated by varying the teacher-construction data size and oracle-score filtering threshold. The last two panels report the best evaluation performance within 100 training steps, and top numbers denote mean predicted thermostability.}
    \label{fig:efficiency}
\end{figure}

Beyond alignment quality, a practical protein design method must be computationally viable, especially given expensive oracle queries. We therefore assess whether \name maintains the efficiency advantages in data and training computation. As shown in Figure~\ref{fig:efficiency}, \name proves efficient in both online and offline alignment.
For online cost, \name improves thermostability much faster than ProtRL. 
Within 20 gradient steps, \name increases the thermostability score from about 0.19 to 0.52, whereas ProtRL remains around 0.19. 
Under the same wall-clock comparison, \name reaches about 0.70 within 10 minutes and further improves to 0.89 by 30 minutes, while ProtRL stays below 0.20. \name reaches a thermostability score of 0.25 in 3.86 minutes, whereas ProtRL requires 31.2 minutes to reach a comparable score. 
This corresponds to an 8$\times$ training speed-up.
This indicates that token-level OPD provides a denser and more efficient learning signal than reward-based online optimization. 
For offline cost, we find that using 100 filtered sequences already yields a mean thermostability of 0.914, close to that of a larger quantity. 
As for data quality, when raising the filtering threshold from 50 to 70, the mean score increases from 0.861 to 0.920. 
These results show that \name enables effective teacher construction with limited offline data, with additional gains from oracle-based filtering.

\begin{figure}[!ht]
    \centering
    \includegraphics[width=.8\linewidth]{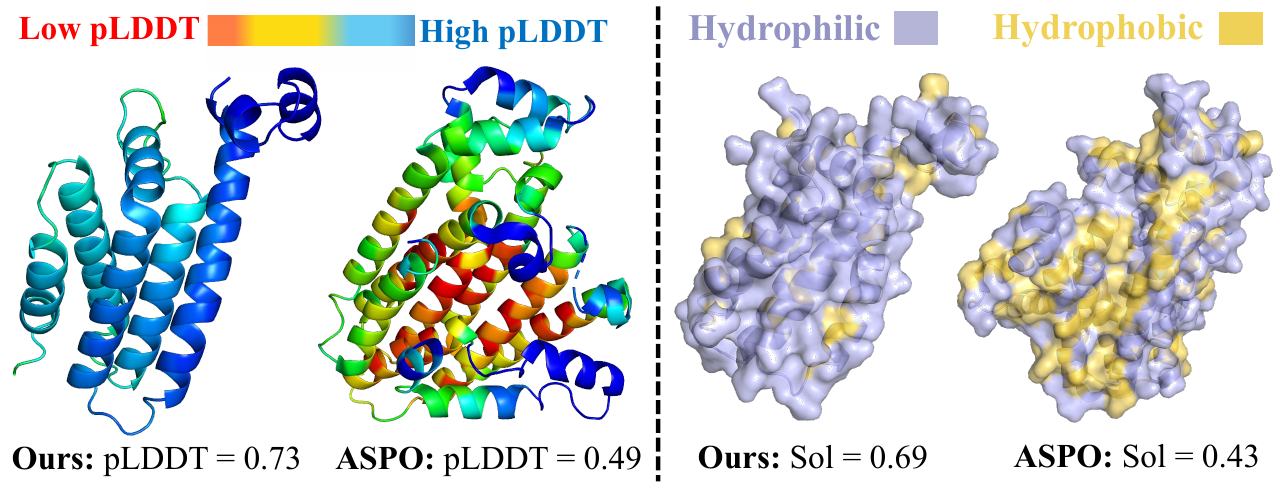}
    \caption{Residue-level visualization.}
    \vspace{-0.3cm}
    \label{fig:vis}
\end{figure}
\paragraph{\textbf{Case Visualization.}}We present generated examples of ProteinOPD and ASPO, ensuring that the thermostability of both exceeds 0.95, and show their scores of pLDDT and solubility. Moreover, we ensure novelty against UniRef by requiring a max seq. identity below 5\%. Thus, ProteinOPD enables effective multi-attribute alignment and the creation of novel sequences outside current databases in practical use.


\section{Conclusion}
We present \name, a multi-objective preference alignment framework for protein language models that aligns protein generation with desired preferences while preserving pretrained designability. 
\name constructs preference-specific teachers through lightweight oracle-guided adaptation and transfers their preference guidance to a shared student via token-level on-policy distillation on student-generated trajectories. 
For multi-objective preference alignment, \name extends OPD to the multi-teacher setting with a normalized product-of-experts consensus target over preference-specific teachers. Experiments across unconditional and conditional protein generation show that \name achieves superior designability--preference trade-offs compared with existing baselines, while delivering an 8$\times$ training speedup over RL-based alignment. These results establish OPD as an effective and efficient mechanism for preference-aligning protein language models.

\clearpage
\bibliographystyle{plainnat}
\bibliography{main}

\clearpage
\beginappendix
\appendix
\section{Details of Metrics}
\label{app:metric_details}

This section provides detailed definitions of the evaluation metrics used in our experiments. 

\paragraph{\textbf{Perplexity.}}
Perplexity (PPL) measures how likely a generated protein sequence is under a reference protein language model. 
Given a generated sequence $x=(x_1,x_2,\ldots,x_L)$, we compute its perplexity as
\begin{equation}
\mathrm{PPL}(x)
=
\exp\left(
-\frac{1}{L}
\sum_{t=1}^{L}
\log p_{\mathrm{ref}}(x_t \mid x_{<t})
\right),
\end{equation}
where $p_{\mathrm{ref}}$ denotes the next-token distribution predicted by the reference PLM. 
In our experiments, we use ProtGPT2 as the evaluator. 
A lower PPL indicates that the generated sequence is more consistent with the distribution learned by the reference PLM, and is therefore used as a proxy for sequence plausibility.

\paragraph{\textbf{Novelty.}}
Novelty measures how different the generated proteins are from known or training sequences. 
Let $\mathcal{R}$ denote the reference set, such as the training set or a database of natural proteins, and let $\mathrm{sim}(P,R)$ denote the sequence similarity between a generated sequence $P$ and a reference sequence $R \in \mathcal{R}$. 
The novelty of $P$ is defined as its dissimilarity to the nearest reference sequence:
\begin{equation}
\mathrm{Novelty}(P)
=
1-
\max_{R \in \mathcal{R}}
\mathrm{sim}(P,R),
\end{equation}
where $\mathrm{sim}(P,R)$ denotes the normalized sequence similarity between $P$ and $R$, computed using MMseqs2. 
The final novelty score is obtained by averaging over all generated proteins:
\begin{equation}
\mathrm{Novelty}(\mathcal{G})
=
\frac{1}{|\mathcal{G}|}
\sum_{P \in \mathcal{G}}
\mathrm{Novelty}(P).
\end{equation}
A higher novelty score indicates that the generated proteins are less similar to known sequences and are less likely to be memorized from the training data.

\paragraph{\textbf{ProTrek Score.}}
For conditional protein generation, we use the ProTrek Score to evaluate whether the generated protein is semantically aligned with the input condition. 
Given a textual condition $c$ and a generated protein sequence $P$, ProTrek encodes them into a shared representation space using its text encoder and protein sequence encoder, respectively. 
The ProTrek Score is computed as the cosine similarity between the two embeddings:
\begin{equation}
\mathrm{ProTrek}(c,P)
=
\cos\left(
\tau_c(c), \tau_p(P)
\right),
\end{equation}
where $\tau_c(c)$ denotes the embedding of the textual condition and $\tau_p(P)$ denotes the embedding of the generated protein sequence. 
A higher ProTrek Score indicates stronger semantic consistency between the generated sequence and the specified condition.

\paragraph{\textbf{pLDDT.}}
The predicted Local Distance Difference Test (pLDDT) is used to evaluate the foldability of generated proteins. 
For a generated sequence $P$ of length $L$, a structure prediction model produces a residue-level confidence score
$\mathbf{s}=(s_1,s_2,\ldots,s_L)$, where each $s_i \in [0,100]$ measures the confidence of the predicted local structure around residue $i$. 
We report the average pLDDT over all residues:
\begin{equation}
\mathrm{pLDDT}(P)
=
\frac{1}{L}
\sum_{i=1}^{L}
s_i.
\end{equation}
Higher pLDDT values indicate more confident structure predictions and are commonly used as a proxy for protein foldability.

\paragraph{\textbf{PAE.}}
Predicted Aligned Error (PAE) measures the expected positional error between residue pairs in a predicted structure. 
For a generated protein of length $L$, the structure prediction model outputs a matrix
$\mathbf{E}\in\mathbb{R}^{L\times L}$, where $E_{ij}$ represents the predicted error of residue $i$ when the predicted and actual structures are aligned on residue $j$. 
We compute the average PAE as
\begin{equation}
\mathrm{PAE}(P)
=
\frac{1}{L^2}
\sum_{i=1}^{L}
\sum_{j=1}^{L}
E_{ij}.
\end{equation}
Lower PAE values indicate more reliable global structure prediction, especially with respect to residue-pair geometry, domain packing, and long-range structural arrangement.
\paragraph{\textbf{Solubility.}}
We download the Protein-Sol oracle as the solubility predictor from \url{https://protein-sol.manchester.ac.uk/software}. The predictor outputs a score ranging from \(0\) to \(1\) for each sequence. 
\paragraph{\textbf{Thermostability.}}
We download the TemBERTure model from \url{https://github.com/ibmm-unibe-ch/TemBERTure} and use the \textbf{temBERTure\_CLS} mode. The model outputs a Thermophilic score ranging from \(0\) to \(1\) for each sequence. 
\paragraph{\textbf{Hypervolume.}}
Hypervolume (HV) measures the dominated volume of the Pareto set in the normalized objective space. 
For each generated protein $P$, let $m_k(P)$ denote its raw value on the $k$-th metric. 
We first apply a metric-specific transformation
\begin{equation}
\psi_k(m)=
\begin{cases}
\log(1+m), & \text{if } m \text{ is PPL},\\
m, & \text{otherwise},
\end{cases}
\end{equation}
where the $\log(1+\cdot)$ transformation compresses the large numerical range of PPL. 
We then orient all metrics such that larger values are better:
\begin{equation}
a_k(P)=s_k \psi_k(m_k(P)), 
\quad
s_k=
\begin{cases}
+1, & \text{for metrics to maximize},\\
-1, & \text{for metrics to minimize}.
\end{cases}
\end{equation}
Here, metrics to maximize include Novelty, pLDDT, solubility, thermostability, and ProTrek Score, while metrics to minimize is PPL.

The oriented scores are min-max normalized over all generated sequences from all compared methods under the same experimental setting:
\begin{equation}
z_k(P)=
\frac{a_k(P)-a_k^{\min}}
{a_k^{\max}-a_k^{\min}},
\quad
a_k^{\min}=\min_{P\in\mathcal{U}} a_k(P),
\quad
a_k^{\max}=\max_{P\in\mathcal{U}} a_k(P),
\end{equation}
where $\mathcal{U}$ denotes the union of generated sequences from all methods. 
The normalized objective vector is therefore
$\mathbf{z}(P)=(z_1(P),\ldots,z_K(P))\in[0,1]^K$.

Given the generated set $\mathcal{G}$ of a method, we extract the non-dominated subset
$\mathrm{ND}(\mathcal{G})$ in the normalized objective space. 
The HV is defined as
\begin{equation}
\mathrm{HV}(\mathcal{G})
=
\lambda_K
\left(
\bigcup_{P\in \mathrm{ND}(\mathcal{G})}
[\mathbf{0},\mathbf{z}(P)]
\right),
\end{equation}
where $\lambda_K(\cdot)$ denotes the $K$-dimensional Lebesgue measure and $\mathbf{0}$ is used as the reference point. 
A larger HV indicates that the generated sequences achieve a better trade-off across designability and preference-alignment objectives.
\section{Details of Experimental Setups}
\subsection{Data Construction Details}
\label{app:data_construct}
We construct preference-specific datasets using the same oracle models described in Section~\ref{exp:metrics}. 
For each sampled sequence, foldability is evaluated using ESMFold-derived pLDDT and pAE, thermostability using TemBERTure, and solubility using Protein-Sol. 

We apply separate filtering thresholds in the unconditional and conditional settings. 
For unconditional generation, sequences are sampled from UniRef50 and filtered independently for pLDDT, Sol and Thermo using 80, 0.7, and 0.5, respectively. 
For conditional generation, sequences are sampled from the conditional protein subset and independently filtered for pLDDT, Sol, and Thermo using thresholds of 80, 0.7, and 0.7, respectively. 
For all experiments, we use 200 sequences to finetune the pre-trained model unless otherwise specified. 
\subsection{Training Details}
\begin{table}[t]
\centering
\small
\caption{Training configuration for teacher construction and ProteinOPD.}
\label{tab:training_config}
\renewcommand{\arraystretch}{1.12}
\resizebox{\linewidth}{!}{
\begin{tabular}{lcccc}
\toprule
\textbf{Parameter} 
& \textbf{ProtGPT2 Teacher} 
& \textbf{ProLLaMA Teacher} 
& \textbf{ProteinOPD (ProtGPT2)} 
& \textbf{ProteinOPD (ProLLaMA)} \\
\midrule
Base Model 
& ProtGPT2 
& ProLLaMA 
& ProtGPT2 
& ProLLaMA \\

Training Stage 
& Teacher SFT 
& Teacher SFT 
& OPD 
& OPD \\

Fine-tuning Method 
& Prefix Tuning 
& LoRA Tuning 
& LoRA Tuning
& LoRA Tuning \\

\midrule
Learning Rate 
& $1\times10^{-3}$ 
& $2\times10^{-5}$ 
& $2\times10^{-5}$ 
& $2\times10^{-5}$  \\

Training Steps 
& 50
& 20
& 200 
& 100 \\

\midrule
Prefix Length 
& 100 
& -- 
& -- 
& -- \\

LoRA Rank ($r$) 
& -- 
& 8 
& 64 
& 8 \\

LoRA Alpha ($\alpha$) 
& -- 
& 16 
& 128 
& 16 \\

LoRA Target Modules 
& -- 
& \makecell{q\_proj, k\_proj, v\_proj, o\_proj,\\
gate\_proj, up\_proj, down\_proj}
& \makecell{c\_attn c\_proj c\_fc}
& \makecell{q\_proj, k\_proj, v\_proj, o\_proj,\\
gate\_proj, up\_proj, down\_proj} \\
\bottomrule
\end{tabular}
}
\end{table}
All training runs are conducted on 2 NVIDIA A800-SXM4-80GB GPUs using AdamW with bfloat16 precision and a warm-up ratio of 0.1. 
The maximum completion length is set to 512 tokens, with top-$k$ sampling of 200 and top-$p$ sampling of 0.95. 
For ProteinOPD, the student rollout temperature is set to 1.0, while teacher distributions are computed with a temperature of 0.7. 
The JSD loss uses $\beta=0.5$ for all OPD experiments.
For multi-teacher OPD, we use the same distillation weights in both unconditional and conditional settings. 
Specifically, the Foldability, Thermostability, and Solubility teachers are assigned weights of 0.3, 0.4, and 0.3, respectively. More implementation details are provided in Table~\ref{tab:training_config}. 
\subsection{Baseline Details
}
\label{app:baseline_details}
The multi-objective preference alignment baselines are limited. For ASPO, we adapt its test-time steering mechanism to the multi-objective setting by linearly combining property-specific steering vectors with predefined weights. 
Specifically, the steering vector is constructed as a weighted sum of the thermostability, solubility, and foldability vectors, with weights $1.0$, $0.5$, and $0.5$, respectively. 
This combined vector is then used to guide sequence generation toward multiple target preferences.

MoMPNN is included as a cross-paradigm reference baseline rather than a task-identical sequence-generation baseline. 
Unlike ProteinOPD, which generates protein sequences without requiring structural inputs, MoMPNN operates in an inverse-folding setting and requires backbone structures as design conditions. 
Since few multi-objective preference-alignment baselines are available for sequence-only protein generation, we include MoMPNN to provide an additional reference point for multi-objective optimization. 
We randomly sample 256 protein backbones and use MoMPNN to design one sequence for each backbone. 
The resulting sequences are then evaluated with the same oracle-based preference metrics and designability metrics as all other methods. 
Therefore, the comparison should be interpreted as a downstream performance reference across design paradigms rather than a strictly task-identical comparison.

Since ESM2 and ESM3 are auto-encoding protein language models rather than autoregressive generators, they cannot directly generate protein sequences from left to right. Following ASPO~\cite{huang2025steering}, we use an iterative masked-token infilling protocol to obtain generated sequences from these models. Specifically, we first sample reference protein sequences from UniRef50 as reference sequences. In each iteration, we randomly select and mask 10\% of the tokens without replacement, then generate them using the model. 

ProLLaMA is applicable to both unconditional and conditional generation. 
Therefore, we include ProLLaMA as a baseline in both settings. 
In the unconditional setting, it is used to generate protein sequences without additional family constraints. 
In the conditional setting, it is prompted with the lysozyme-like superfamily description. 

For text-guided baselines, we use fixed, property-explicit prompts to ensure that the target preference is directly provided to the model. 
For single-objective generation, the prompt template is 
``Design novel protein sequences with high [PROPERTY].''
where [PROPERTY] is instantiated as foldability, solubility, or thermostability. 
For multi-objective generation, we use 
``Design novel protein sequences with high foldability, high solubility, and high thermostability.'' 
In the conditional setting, we additionally include the family constraint:
``Design novel protein sequences from the lysozyme-like superfamily with high [PROPERTY].''
or its multi-objective counterpart. 

To ensure a fair comparison, all baselines are evaluated using the same generation and evaluation protocol unless otherwise specified. 
We generate 256 protein sequences for each method with temperature $0.7$, and a maximum generation length of 512 tokens. 
We evaluate generated sequences using designability and preference-alignment metrics. 
Specifically, pLDDT, pAE, ProTrek Score, Novelty, and PPL are computed based on PDFBench~\cite{kuang2025pdfbench}. 
For each metric, we report the mean value over all 256 generated sequences.
\section{Additional Experimental Results}
\subsection{Performance on Multi-objective Preference Alignment in the conditional setting}
\label{app: multi-prefer-condition}
\begin{table}[!ht]
    \caption{Performance comparison of baselines and \name in {multi-objective} preference alignment in the \textit{conditional} setting. The best and second-best scores are marked in \textbf{bold} and \second{underline}.}
    \label{tab:exp-1}
    \centering
    \resizebox{\textwidth}{!}{
    \begin{tabular}{lccccccc}
        \toprule
        \multirow{3}{*}{Models} & \multicolumn{2}{c}{Designability} & \multicolumn{2}{c}{Foldability} & \multicolumn{1}{c}{Solubility} & \multicolumn{1}{c}{Thermostability} & \multicolumn{1}{c}{HV} \\
        \cmidrule[0.5pt](lr){2-3} \cmidrule[0.5pt](lr){4-5} \cmidrule[0.5pt](lr){6-6} \cmidrule[0.5pt](lr){7-7} \cmidrule[0.5pt](lr){8-8}
        & \makecell{PPL \\ (\(\downarrow\))} &
        \makecell{ProTrek \% \\ (\(\uparrow\))} & \makecell{pLDDT \\ (\(\uparrow\))} & \makecell{pAE \\ (\(\downarrow\))} & \makecell{Sol \\(\(\uparrow\))} & \makecell{Thermo \\(\(\uparrow\))} & \makecell{HV \\ (\(\uparrow\))} \\
        \midrule
        ProLLaMA & 1473.68 & \second{19.18} & 43.03 & 20.78 & 0.62 & 0.31 & 0.09 \\
        \midrule
        ASPO & 1264.59 & 8.46 & 43.27 & 23.07 & \best{0.76} & \second{0.62} & \second{0.44} \\
        ProDVa & \second{953.33} & 18.95 & \second{56.08} & \second{20.41} & 0.30 & 0.13 & 0.05 \\
        \name & \chl \best{766.6}  & \chl \best{22.05}  & \chl \best{61.36} & \chl \best{15.56}  & \chl \second{0.64} & \chl \best{0.8} & \chl \best{0.61} \\ 
        \bottomrule
    \end{tabular}
    }
\end{table}
In the conditional multi-objective setting, \name achieves the strongest overall trade-off among competing objectives. 
Compared with the base ProLLaMA model, \name reduces PPL by 707.08 points, from 1473.68 to 766.60, and improves the ProTrek score by 2.87 points, from 19.18 to 22.05. 
This suggests that \name improves sequence plausibility while maintaining, and even strengthening, alignment with the conditional input. 
For foldability, \name improves pLDDT by 18.33 points and reduces pAE by 5.22 points compared with ProLLaMA, indicating more plausible predicted structures. 
Although ASPO achieves a higher solubility score than \name, \name improves thermostability by 0.18 over ASPO and achieves a 0.17 higher hypervolume score. 
These results indicate that \name does not merely optimize a single property, but provides a better overall balance across designability, foldability, solubility, and thermostability.

\subsection{Independent PPL Evaluation}
\label{app:independent_ppl}
Since ProtGPT2 is used as the primary evaluator for PPL in our main experiments, we further compute PPL using ProGen2 as an independent autoregressive protein language model. 
This evaluation provides an additional check on whether the improved designability is specific to the ProtGPT2 evaluator. 
ProteinOPD obtains a ProGen2-based PPL of 5.83, lower than that of the original ProtGPT2 generator (7.25). 
This result suggests that the generated sequences remain more plausible under an evaluator that is not used in either teacher construction or OPD training. 
\section{Proofs for Generalized OPD}
\label{app:proofs}

This section provides the proofs for the normalized product-of-experts target and the normalization-based disagreement measure used in generalized OPD. 
Let $\mathcal{V}$ denote the token vocabulary, and let $\Delta(\mathcal{V})$ denote the probability simplex over $\mathcal{V}$. 
For each teacher $i\in\{1,\ldots,M\}$, $p_T^i(\cdot\mid x,y_{<n})$ denotes its next-token distribution at this prefix. 
The teacher weights satisfy $w_i\ge 0$ and $\sum_{i=1}^M w_i=1$. 
Since PLMs parameterize next-token distributions through softmax outputs, we assume $p_T^i(v\mid x,y_{<n})>0$ for all $v\in\mathcal{V}$.

\paragraph{\textbf{Proof of Proposition 1.}}
At the prefix $(x,y_{<n})$, generalized OPD constructs the local consensus target by solving
\begin{equation}
q_n^\star
=
\arg\min_{q_n\in\Delta(\mathcal{V})}
\sum_{i=1}^M
w_i
\mathrm{KL}
\left(
q_n(\cdot)
\,\middle\|\,
p_T^i(\cdot\mid x,y_{<n})
\right).
\end{equation}
Expanding the objective gives
\begin{align}
\sum_{i=1}^M w_i
\mathrm{KL}
\left(
q_n
\,\middle\|\,
p_T^i(\cdot\mid x,y_{<n})
\right)
&=
\sum_{v\in\mathcal{V}}
q_n(v)
\left[
\log q_n(v)
-
\sum_{i=1}^M
w_i
\log p_T^i(v\mid x,y_{<n})
\right].
\end{align}
Define the normalized product-of-experts distribution
\begin{equation}
p_{\mathrm{PoE}}(v\mid x,y_{<n})
=
\exp
\left(
\sum_{i=1}^M
w_i
\log p_T^i(v\mid x,y_{<n})
-
Z_n
\right),
\end{equation}
where
\begin{equation}
Z_n
=
\log
\sum_{u\in\mathcal{V}}
\exp
\left(
\sum_{i=1}^M
w_i
\log p_T^i(u\mid x,y_{<n})
\right).
\end{equation}
By substituting this definition into the expanded objective, we obtain
\begin{align}
\sum_{i=1}^M w_i
\mathrm{KL}
\left(
q_n
\,\middle\|\,
p_T^i(\cdot\mid x,y_{<n})
\right)
&=
\sum_{v\in\mathcal{V}}
q_n(v)
\log
\frac{
q_n(v)
}{
p_{\mathrm{PoE}}(v\mid x,y_{<n})
}
-
Z_n \\
&=
\mathrm{KL}
\left(
q_n
\,\middle\|\,
p_{\mathrm{PoE}}(\cdot\mid x,y_{<n})
\right)
-
Z_n .
\end{align}
Since $Z_n$ is independent of $q_n$ and the KL divergence is non-negative, the minimum is achieved if and only if
\begin{equation}
q_n^\star(\cdot)
=
p_{\mathrm{PoE}}(\cdot\mid x,y_{<n}).
\end{equation}
Therefore, the solution to the weighted multi-teacher objective is exactly the normalized product-of-experts target in Eq.~(6). 
This proves Proposition~1.

\paragraph{\textbf{Proof of Proposition 2.}}
For the same prefix $(x,y_{<n})$, the normalization term in Eq.~(6) is
\begin{equation}
Z_n
=
\log
\sum_{v\in\mathcal{V}}
\prod_{i=1}^M
p_T^i(v\mid x,y_{<n})^{w_i}.
\end{equation}
For each token $v$, the weighted AM--GM inequality gives
\begin{equation}
\prod_{i=1}^M
p_T^i(v\mid x,y_{<n})^{w_i}
\le
\sum_{i=1}^M
w_i
p_T^i(v\mid x,y_{<n}).
\end{equation}
Summing both sides over the vocabulary yields
\begin{align}
\sum_{v\in\mathcal{V}}
\prod_{i=1}^M
p_T^i(v\mid x,y_{<n})^{w_i}
&\le
\sum_{v\in\mathcal{V}}
\sum_{i=1}^M
w_i
p_T^i(v\mid x,y_{<n}) \\
&=
\sum_{i=1}^M
w_i
\sum_{v\in\mathcal{V}}
p_T^i(v\mid x,y_{<n})
=
1 .
\end{align}
Taking the logarithm gives
\begin{equation}
Z_n \le 0.
\end{equation}
The equality condition of the weighted AM--GM inequality holds if and only if, for every token $v$ and every teacher with $w_i>0$, the values $p_T^i(v\mid x,y_{<n})$ are identical across teachers. 
Thus, $Z_n=0$ if and only if all non-zero-weight teachers induce the same next-token distribution at the prefix $(x,y_{<n})$; otherwise, $Z_n<0$. 
Equivalently, $-Z_n\ge 0$ is zero under complete agreement and increases as the shared probability mass among teachers decreases. 
This proves Proposition~2.
\section{Limitations}
In this work, we evaluate generated proteins using widely adopted computational Oracles for foldability, thermostability, solubility, and conditional sequence--function alignment. While these Oracles provide efficient and informative proxies for large-scale protein generation, they may not fully capture the biological validity, functional efficacy, or experimental feasibility of the designed proteins. Therefore, future work will focus on wet-lab experiments to empirically validate the structural and biochemical properties of the generated sequences.

\end{document}